\title{A Modal Logic for Explaining some Graph Neural Networks}
\titlerunning{A Modal logic for explaining some GNNs}
\author{Pierre Nunn and Francois Schwarzentruber}
\institute{Université de Rennes}
\date{October 2022}
\renewcommand{\phi}{\varphi}
\newcommand{\set}[1]{\{#1\}}
\newcommand\setvertices V
\newcommand\setedges E
\newcommand\labeling \ell
\newcommand\setR{\mathbb R}
\newcommand{\myproblem}[3]{\begin{center}
\begin{minipage}{10cm}
\textbf{#1} \\[-7mm]
\begin{itemize}
\item input: #2
\item output: #3
\end{itemize}
\end{minipage}
\end{center}}
\newcommand{\aGNN}{A}
\newcommand{\semanticsof}[1]{[[#1]]}
\newcommand{\statetv}[2]{x_{#1}(#2)}
\newcommand{\statet}[1]{x_{#1}}
\newcommand{\multiset}[1]{\{\{#1\}\}}
\newcommand{\logicKsharp}{K^{\#}}
\newcommand{\Ap}{Ap}
\newcommand{\modalitynumber}{\#}
\newcommand{\istrue}[1]{1_{#1}}
\newcommand\union\cup
\newcommand{\card}[1]{card(#1)}
\newcommand\sigmabold{\vec{\sigma}}
\newcommand{\EXPTIME}{\mathsf{EXPTIME}}
\newcommand{\EXPTIMEoracleNP}{\mathsf{EXPTIME}^{\mathsf{NP}}}
\newcommand{\NP}{\mathsf{NP}}
\newcommand{\PSPACE}{\ensuremath{\mathsf{PSPACE}}}
\newcommand{\EXPSPACE}{\mathsf{EXPSPACE}}
\begin{document}

\maketitle

\begin{abstract}
In this paper, we propose a modal logic in which counting modalities appear in linear inequalities. We show that each formula can be transformed into an equivalent graph neural network (GNN). We also show that each GNN can be transformed into a formula. We show that the satisfiability problem is decidable. We also discuss some variants that are in PSPACE.
\end{abstract}

\section{Introduction}

Graph neural networks are used to learn a class of graphs or pointed graphs (a graph with a designated vertex). GNNs are used in many applications: social networks \cite{DBLP:journals/kbs/SalamatLJ21}, chemistry,  knowledge graphs etc. (see \cite{DBLP:journals/aiopen/ZhouCHZYLWLS20} for an overview of the applications of GNNs).
The Saint-Graal for explaning GNNs would be to provide an algorithm for the following problem:

\myproblem{Synthesis of an explanation}{a GNN $\aGNN$}{a logical formula $\phi$ such that $\semanticsof{\aGNN} = \semanticsof{\phi}$}

\noindent where $\semanticsof \aGNN$ is the class of pointed graphs recognized by the GNN $\aGNN$, and $\semanticsof \phi$ is the class of pointed graphs in which $\phi$ holds. In other words, the goal is to compute a formula $\phi$ (in modal logic, or in graded modal logic for example) that completely explains the class of graphs recognized by~$\aGNN$.

For instance, in a social network, a person is recommended by a GNN $A$ iff that person has at least one friend that is musician (the formula $\phi$ being for instance expressed in modal logic by $\Diamond musician$, where $\Diamond$ is the existential modal operator).
The synthesis of an explanation in some logic - let say first-order logic, modal logic, or graded modal logic - is highly challenging. In this paper, we tackle a less challenging problem but that goes in the same direction. We provide an algorithmic solution for tackling the following problems:

\begin{multicols}{2}
\myproblem{P1: Verification of an explanation}{a GNN $\aGNN$, a logical formula $\phi$}{yes, if $\semanticsof{\aGNN} = \semanticsof{\phi}$}

\myproblem{P2: Verification of an explanation}{a GNN $\aGNN$, a logical formula $\phi$}{yes, if $\semanticsof{\aGNN} \subseteq \semanticsof{\phi}$}

\myproblem{P3: Verification of an explanation}{a GNN $\aGNN$, a logical formula $\phi$}{yes, if $\semanticsof{\phi} \subseteq \semanticsof{\aGNN}$}

\myproblem{P4: Finding a counterexample}{a GNN $\aGNN$, a logical formula $\phi$}{yes, if $\semanticsof{\phi} \cap \semanticsof{\aGNN} \neq \emptyset$}
    
\end{multicols}

Here are kind of question instances that problems P1-4 are able to solve:
\begin{itemize}
    \item P1: is a recommended person a person that has at least one musician friend?
    \item P2: does any recommended person have a musician friend?
    \item P3: is any person that a musician friend recommended?
    \item P4: is it possible to recommend a person that has at least one musician friend? 
\end{itemize}

Our solution is a general methodology to solve the problems P1-4 which consists in representing everything ($\phi$ but also the GNN $A$) in logic. 
Interestingly, there is a neat correspondence between graded modal logic and GNNs (\cite{barcelo_logical_2020}, \cite{grohe_logic_2022}). Graded modal logic \cite{DBLP:journals/sLogica/Fattorosi-Barnaba85} is a modal logic offering the ability via the construction $\Diamond^{\geq k} \phi$ to say that a vertex has more that $k$ successors satisfying a formula $\phi$ We know that a GNN that is expressible in first-order logic (FO) is also captured by a formula in graded modal logic \cite{barcelo_logical_2020}. However the use of graded modal logic is problematic because we do not know how to represent \emph{any} GNN into it (in particular those who are not expressible in FO).

That is why we define a logic called $\logicKsharp$ which is more expressive enough to capture a reasonable class of GNNs while being able to express any formula in modal logic or graded modal logic. We then provide an algorithm for the satisfiability problem for $\logicKsharp$.
In this article, we capture AC-GNN (aggregation-combination graph neural networks) \cite{barcelo_logical_2020} that are defined by an aggregation function which is the sum of feature vectors, the combination functions being linear functions truncated with the activation function $max(0, min(1, x))$, and where the classification function is linear too. The $max(0, min(1, x))$ is called truncated reLU (see \cite{barcelo_logical_2020}) or clipped reLU (see \cite{DBLP:journals/symmetry/WangWL19}).

 The logic $\logicKsharp$  we consider is a combination of counting modalities and linear programming. It extends graded modal logic. We provide a translation from any GNN $A$ to a formula $tr(A)$ in $\logicKsharp$ so the problems P1-4 reformulate as follows:
\begin{itemize}
    \item P1: is $tr(A) \leftrightarrow \phi$ valid?
    \item P2: is $tr(A) \rightarrow \phi$ valid?
    \item P3: is $\phi \rightarrow tr(A)$ valid?
    \item P4: is $\phi \land tr(A)$ satisfiable?
\end{itemize}

The formula $\phi$ can be for instance a formula of modal logic K and graded modal logic. As $\logicKsharp$ subsumes these logics, all the problems P1-4 in fact reduce to the satisfiability problem of $\logicKsharp$ (recall that a formula valid if its negation is unsatisfiable). We prove that the satisfiability problem of $\logicKsharp$ is \emph{decidable}.

 Interestingly, given a formula, we are able to construct an equivalent GNN. This can be used to \emph{tune} an existing GNN.
Suppose you learnt a GNN $\aGNN$ but you aim at constructing a new GNN that behaves like $\aGNN$ but excludes the pointed graphs that do not satisfy $\phi$. More precisely, for the following problem:

\myproblem{Tuning of a GNN}{a GNN $\aGNN$, a logical formula $\phi$}{a new GNN $\aGNN'$ such that $\semanticsof{\aGNN'} = \semanticsof{\aGNN} \cap \semanticsof{\phi}$}

we simply take a GNN $\aGNN'$ that represents the formula $tr(\aGNN) \land \phi$. 
More precisely, the contributions of this paper are:
\begin{itemize}
    \item the formal definition of logic $\logicKsharp$;
    \item the construction of a GNN $\aGNN$ equivalent to a $\logicKsharp$-formula $\phi$ (it generalizes the result of Prop 4.1 in \cite{barcelo_logical_2020})
    \item the construction of a $\logicKsharp$-formula $tr(\aGNN)$ equivalent to $\aGNN$
    \item the fact that the satisfiability problem of $\logicKsharp$ is in $\EXPTIMEoracleNP$ (i.e. $\EXPTIME$ with an $\NP$ oracle).
    \item Restrictions of the satisfiability problem that are in \PSPACE.
\end{itemize}

\emph{Outline. } 
In Section~\ref{section:GNN} we recall the definition of AC-GNN. In Section~\ref{section:logic}, we define the logic $\logicKsharp$. In Section~\ref{section:correspondence}, we study the correspondence between GNN and logic $\logicKsharp$. In Section~\ref{section:decidability}, we discuss the satisfiability problem of $\logicKsharp$.

\section{Background on AC-GNN}
\label{section:GNN}

In this paper, we consider aggregate-combine GNN (AC-GNN) \cite{barcelo_logical_2020}, also sometimes called message passing neural
network (MPNN) \cite{grohe_logic_2022}. In the rest of the paper, we call a AC-GNN simply a GNN.

\newcommand{\dimensionstate}{d}
\newcommand{\nblayers}{L}
\newcommand{\layer}{\mathcal L}
\newcommand{\AGG}{AGG}
\newcommand{\COMB}{COMB}
\newcommand{\CLS}{CLS}

\begin{definition}[labeled graph]
A (labeled directed) graph $G$ is a tuple $(\setvertices, \setedges, \labeling)$ such that $\setvertices$ is a finite set of vertices, $\setedges \subseteq \setvertices \times \setvertices$ a set of directed edges and $\labeling$ is a mapping from~$\setvertices$ to a valuation over a set of atomic propositions. We write  $\ell(u)(p) = 1$ when atomic proposition $p$ is true in $u$, and $\ell(u)(p) = 0$ otherwise.
\end{definition}

\begin{definition}[state]
A state $x$ is a mapping from $\setvertices$ into $\setR^\dimensionstate$ for some $\dimensionstate$.
\end{definition}

As in \cite{scarselli_graph_2009}, we use the term `state' for both denoting $x$ and also the vector $x(v)$ at a given vertex $v$. 
Suppose that the relevant atomic propositions are $p_1, \dots, p_k$.
The initial state $x_0$ is defined by: $$x_0(u) = (\ell(u)(p_1), \dots, \ell(u)(p_k), 0, \dots, 0)$$ for all $u \in V$.
For simplicity, we suppose that all states are of the same dimension~$\dimensionstate$.


\begin{definition}[aggregation function/combination function]
An aggregation function $\AGG$ is a function mapping finite multisets of vectors in $\mathbb{R}^{\dimensionstate}$ to vectors in $\mathbb{R}^{\dimensionstate}$. A combination function $\COMB$ is a function mapping a vector in $\mathbb{R}^{2\dimensionstate}$ to vectors in $\mathbb{R}^{\dimensionstate}$.
\end{definition}


\begin{definition}[GNN layer]
A GNN layer of input/output dimension $p$ is defined by an aggregation function $\AGG$ and a combination function $\COMB$. 
\end{definition}

\begin{definition}[GNN]
A GNN is a tuple ($\layer^{(1)},...\layer^{(\nblayers)}, \CLS$) where $\layer^{(1)},...\layer^{(\nblayers)}$ are $d$ GNN layers and $\CLS : \setR^\dimensionstate \rightarrow \set{0, 1}$ is a classification function.
\end{definition}

When applied to a graph $G$, the $t$-th GNN layer $\layer^{(t)}$ transforms the previous state $\statet {t-1}$ into the next state  $\statet {t}$ by:
$$\statetv{t}u= \COMB (\statetv {t-1} u,\AGG(\multiset{\statetv {t-1} v | uv \in \setedges}))$$
\noindent
where $\AGG$ and $\COMB$ are respectively the aggregation and combination function of the $t$-th layer. 
In the above equation, note that the argument of $\AGG$ is the multiset of the state vectors of the successors of $v$. Thus, the same vector may occur several times in that multiset. Figure~\ref{figure:layer} explains how a layer works at each vertex.

\tikzstyle{vertex} = [circle,draw, inner sep=0mm,minimum height=3mm,font=\tiny]
\tikzstyle{processarrow} = [line width = 1mm, -latex]

\newcommand{\tikzexamplegraphm}[5]
{
\begin{tikzpicture}[xscale=1, yscale=0.8]
   \node[vertex, #2] (u) at (-1, 0) {};
   \node[vertex,  fill=blue!20!orange] (v) at (-0.5, 1) {#3};
   \node at (-0.5, 1.4) {$u$};
   \node[vertex,  #4] (w) at (0, 0) {};
   \node[vertex] (y) at (-1, -1) {#5};
   \node at (0, -1) {$#1$};
   \draw[->] (v) edge (u);
   \draw[->] (v) edge (w);
   \draw[->] (u) edge (y);
\end{tikzpicture}
}

\begin{figure}
    \centering
     \begin{tikzpicture}[node distance=30mm]
    \node (gs) at (-0.35, 0.2) {\tikzexamplegraphm{x_{t-1}}{fill=blue!20!white}{}{fill=blue!20!white}{}};
   \node[draw, right of=gs, blue] (AGG) {$\AGG$};
   \draw[processarrow, blue] (u) edge[bend right=20] (AGG);
   \draw[processarrow, blue] (w) edge[bend left=20] (AGG);
    \node[draw, right of=AGG] (COMB) {$\COMB$};
    \draw[processarrow, blue] (AGG) edge[bend left=0] (COMB);
    \draw[processarrow, orange] (v) edge[bend left=20] (COMB);
    \node[right of=COMB] (g1) {\tikzexamplegraphm{x_t}{}{}{}{}};
    \draw[processarrow, orange] (COMB) edge[bend left=20] (8.5, 1);
    \end{tikzpicture}
    \caption{A layer in a GNN transforms the state $x_{t-1}$ at step $t-1$ into the state $x_t$ at time $t$. The figure shows how $x_t(u)$ is computed. First, the function $\AGG$ is applied to the state in the successors of $u$. Then $\COMB$ is applied to that result and $x_{t-1}(u)$ to obtain $x_t(u)$.}
    \label{figure:layer}
\end{figure}
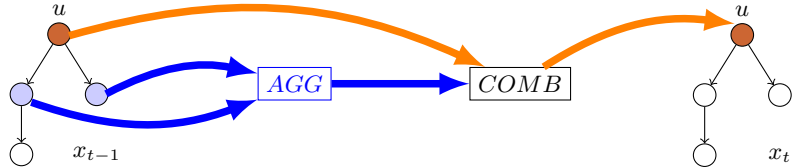

Figure~\ref{figure:gnn} explains how the overall GNN works: the state is updated at each layer; at the end the fonction $\CLS$ says whether each vertex is positive (1) or negative (0).

\newcommand{\tikzexamplegraph}[5]
{
\begin{tikzpicture}[scale=0.5]
\pgfmathparse{70*rnd+30}
\edef\tmp{\pgfmathresult}
   \node[vertex, fill=white!\tmp!black] (u) at (-1, 4) {#2};
   \pgfmathparse{70*rnd+30}
\edef\tmp{\pgfmathresult}
   \node[vertex,fill=white!\tmp!black] (v) at (-0.5, 5) {#3};
   \pgfmathparse{70*rnd+30}
\edef\tmp{\pgfmathresult}
   \node[vertex,fill=white!\tmp!black] (w) at (0, 4) {#4};
   \pgfmathparse{70*rnd+30}
\edef\tmp{\pgfmathresult}
   \node[vertex,fill=white!\tmp!black] (y) at (-1, 3) {#5};
   \node at (0, 3) {$#1$};
   \draw[->] (v) edge (u);
   \draw[->] (v) edge (w);
   \draw[->] (u) edge (y);
\end{tikzpicture}
}

\newcommand{\tikzexamplegraphoutput}[5]
{
\begin{tikzpicture}[scale=0.5]
   \node[vertex] (u) at (-1, 4) {#2};
   \node[vertex] (v) at (-0.5, 5) {#3};
   \node[vertex] (w) at (0, 4) {#4};
   \node[vertex] (y) at (-1, 3) {#5};
   \node at (0, 3) {$#1$};
   \draw[->] (v) edge (u);
   \draw[->] (v) edge (w);
   \draw[->] (u) edge (y);
\end{tikzpicture}
}

\newcommand{\tikzlayer}[1]{layer #1}

\begin{figure}
    \centering
    \begin{tikzpicture}[node distance=17mm]
    \node (g0) {\tikzexamplegraph{x_0}{}{}{}{}};
    \node[draw, right of=g0] (l1) {\tikzlayer{1}};
    \node[right of=l1]  (g1) {\tikzexamplegraph{x_1}{}{}{}{}};
     \node[draw, right of=g1] (l2) {\tikzlayer{2}};
    \node[right of=l2]  (g2) {\tikzexamplegraph{x_2}{}{}{}{}};
     \node[draw, right of=g2] (l3) {CLS};
    \node[right of=l3]  (g3) {\tikzexamplegraphoutput{}{1}{1}{0}{1}};
    \draw[processarrow] (g0) -- (l1);
    \draw[processarrow] (l1) -- (g1);
    \draw[processarrow] (g1) -- (l2);
    \draw[processarrow] (l2) -- (g2);
    \draw[processarrow] (g2) -- (l3);
    \draw[processarrow] (l3) -- (g3);
    \end{tikzpicture}
    \caption{General idea of a GNN with 2 layers applied on a graph with 4 vertices.}
    \label{figure:gnn}
\end{figure}
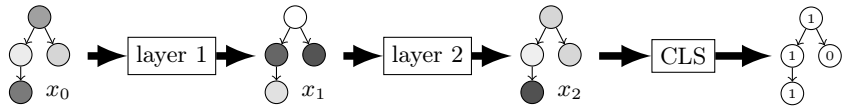

\begin{definition}
    Let $A$ be GNN. We define $\semanticsof A$ as the set of pointed graphs $(G, u)$ such that 
    $CLS(x_\nblayers(u)) = 1$.
\end{definition}

In the rest of the article, we suppose that the aggregation function is a sum:
$$\AGG (X) = \sum_{x \in X} x.$$


\section{Our proposal: logic $\logicKsharp$}
\label{section:logic}

In this section, we describe the syntax and semantics of $\logicKsharp$. We finish the section by defining its satisfiability problem.

\subsection{Syntax}

Consider a countable set $\Ap$ of propositions. We define the language of logic $\logicKsharp$ as the set of formulas generated by the following BNF:
\begin{align*}
    \phi & ::= p \mid \lnot \phi \mid \phi \lor \phi \mid E \geq 0 \\ 
    E & ::= c \mid \istrue\phi \mid \modalitynumber \phi \mid E + E \mid c\times E 
\end{align*}
where $p$ ranges over $\Ap$, and $c$ ranges over $\mathbb Z$. This logic is an extension of modal logic. Atomic formulas are propositions $p$, inequalities and equalities of linear expressions. We consider linear expressions over  $\istrue\phi$ and $\modalitynumber \phi$. The number $\istrue\phi$ is equal to 1 if $\phi$ holds in the current world and equal 0 otherwise. The number $\modalitynumber \phi$ is the number of successors in which $\phi$ hold. The language seems strict but we write $E_1 \leq E_2$ for $E_2 - E_1 \geq 0$, $E = 0$ for $(E \geq 0) \land (-E \geq 0)$, etc.

\begin{example}
Graded modal logic \cite{DBLP:journals/sLogica/Fattorosi-Barnaba85} extends classical modal logic by offering counting modality constructions of the form $\Diamond^{\geq k} \phi$ which means there are at least $k$ successors in which $\phi$ holds.
Logic $\logicKsharp$ is more expressive than Graded modal logic since $\Diamond^{\geq k} \phi$ is rewritten in  $k \leq \modalitynumber \phi$.    
\end{example}

\begin{example}
    Interestingly, the property `there are more $p$-successors than $q$-successors' can be expressed in logic $\logicKsharp$ by $\modalitynumber p \geq \modalitynumber q$, but cannot be expressed in FO, thus not graded modal logic. This is proven via a Ehrenfeucht-Fraïssé game.
\end{example}

\newcommand{\subformulasof}[1]{sub(#1)}
The set of subformulas, $\subformulasof{\phi}$ is defined by induction on $\phi$:
\begin{align*}
    \subformulasof p & = \set{p} \\
    \subformulasof {\lnot \phi} & = \set{\lnot \phi} \union \subformulasof{\phi} \\
    \subformulasof{\phi \lor \psi} & = \set{\phi \lor \psi} \union \subformulasof \phi \union \subformulasof \psi \\
    \subformulasof{E \geq 0} & = \set{E \geq 0} \union \bigcup \set{\subformulasof\psi \mid \text{$1_\psi$ or $\modalitynumber \psi$ appears in $E$}}
\end{align*}

\newcommand{\modaldepthof}[1]{md(#1)}
The modal depth of a formula, $\modaldepthof{\phi}$ and the modal depth of an expression, $\modaldepthof{E}$ are defined by mutual induction on $\phi$ and $E$:

	\begin{minipage}{3cm}
\begin{align*}
    \modaldepthof p & = 0 \\
    \modaldepthof {\lnot \phi} & = \modaldepthof{\phi} \\
    \modaldepthof {\phi \lor \psi} & = \max(\modaldepthof \phi,\modaldepthof \psi) \\
    \modaldepthof{E \geq 0} & = \modaldepthof{E} \\
\end{align*}
\end{minipage}
\begin{minipage}{3cm}
\begin{align*}
    \modaldepthof c & = 0 \\
    \modaldepthof {1_\phi} & = \modaldepthof{\phi} \\
    \modaldepthof {\modalitynumber \phi} & = \modaldepthof{\phi} + 1 \\
    \modaldepthof {E_1 + E_2} & = max(\modaldepthof {E_1},\modaldepthof {E_2}) \\
    \modaldepthof {k*E} & = \modaldepthof{E}
\end{align*}
\end{minipage}

~

As in modal logic, modalities are organized in levels.
\begin{example}
$\modaldepthof{(1_{p \land \modalitynumber {q} \leq 4} \leq \modalitynumber{(\modalitynumber p \geq 2}) \leq 4} = 2$.
The expressions $\modalitynumber {q}$ and $\modalitynumber{(\modalitynumber p \geq 2})$ are at the root level (level 1), while the expression $\modalitynumber p$ is at level~2.
\end{example}

A formula can be represented by a DAG (directed acyclic graph) instead of just a syntactic tree. It allows to share common expressions. For instance $\modalitynumber(p \land q) \geq 1_{p \land q}$ is represented by the following DAG in which $p \land q$ is used twice:
\begin{center}
\begin{tikzpicture}[yscale=0.7]
    \node (leq) at (0, 0) {$\leq$};
    \node (modalitynumber) at (1, 0.5) {$\modalitynumber$};
    \node (1) at (1, -0.5) {$1_{...}$};
    \node (land) at (2, 0) {$\land$};
    \node (p) at (3, 0.5) {$p$};
    \node (q) at (3, -0.5) {$q$};
    \draw[->] (leq) -- (modalitynumber);
    \draw[->] (leq) -- (1);
    \draw[->] (modalitynumber) -- (land);
    \draw[->] (1) -- (land);
    \draw[->] (land) -- (p);
    \draw[->] (land) -- (q);
\end{tikzpicture}    
\end{center}

\subsection{Semantics}

\newcommand{\semanticsvalue}[2]{[[#1]]_{#2}}

As in modal logic, a formula $\phi$ is evaluated in a pointed graph $(G, u)$ (also known as pointed Kripke model). 

\begin{definition}
We define the truth conditions $(G,u) \models \phi$ ($\phi$ is true in $u$) and the semantics $\semanticsvalue{E}{G,u}$ (the value of $E$ in $u$) of an expression $E$ by mutual induction on $\phi$ and $E$ as follows. 

\begin{center}
\begin{tabular}{lll}
$(G,u) \models p$ & if & $\labeling(u)(p) = 1$ \\
$(G,u) \models \neg \phi$ & if & it is not the case that $(G,u) \models \phi$ \\
$(G,u) \models \phi \land \psi$ & if & $(G,u) \models \phi$ and $(G,u) \models \psi$ \\
$(G,u) \models E \geq 0$ & if &  $\semanticsvalue{E}{G,u} \geq 0$ \\
\end{tabular}
\end{center}
\begin{center}
$\begin{array}{ll}
\semanticsvalue{c}{G, u} & = c \\
\semanticsvalue{E_1+E_2}{G, u} & = \semanticsvalue{E_1}{G,u}+\semanticsvalue{E_2}{G,u} \\
\semanticsvalue{c \times E}{G, u} & = c \times \semanticsvalue{E}{G,u} \\
\semanticsvalue{\istrue\phi}{G, u} & = \begin{cases}
1 & \text{if $(G,u) \models \phi$} \\
0 & \text{else}
\end{cases} \\  
\semanticsvalue{\modalitynumber\phi}{G, u} & = |\{v \in \setvertices \mid (u,v) \in E \text{ and } (G,v) \models \phi\}|
\end{array}$
\end{center}
\end{definition}

\begin{example}
    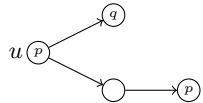
\begin{figure}
        \centering
       \begin{tikzpicture}[scale=1, rotate=90]
   \node[vertex] (u) at (-1, 0) {};
   \node[vertex] (v) at (-0.5, 1) {$p$};
   \node at (-0.5, 1.3) {$u$};
   \node[vertex] (w) at (0, 0) {$q$};
   \node[vertex] (y) at (-1, -1) {$p$};
   \draw[->] (v) edge (u);
   \draw[->] (v) edge (w);
   \draw[->] (u) edge (y);
\end{tikzpicture}
        \caption{Example of a pointed graph $G, u$. We indicate true propositional variables at each vertex.}
        \label{fig:pointedgraph}
    \end{figure}

    Consider the pointed graph $G, u$ shown in Figure~\ref{fig:pointedgraph}. We have $G, u \models p \land (\modalitynumber \lnot p \geq 2) \land \modalitynumber (\modalitynumber p \geq 1) \leq 1$. Indeed, $p$ holds in $u$, $u$ has (at least) two successors in which $\lnot p$ holds. Moreover, there is (at most) one successor which has at least one $p$-successor.
\end{example}


\begin{definition}
    $\semanticsof \phi$ is the set of the pointed graphs $G, u$ such that $G, u \models \phi$.
\end{definition}

\begin{definition} We say that $\phi$ is satisfiable when there exists a pointed graph $G, u$ such that $G, u \models \phi$.
    The satisfiability problem is: given $\phi$ in the language of $\logicKsharp$, is $\phi$ satisfiable?
\end{definition}



\section{Correspondence}
We explain how to transform a $\logicKsharp$-formula into a GNN,  and vice versa.

\subsection{From logic to GNN}
\label{section:correspondence}

\newcommand{\existsC}[1]{\exists^{\geq#1}}

Let us show that each $\logicKsharp$-formula is captured by a GNN. The proof follows the same line that the proof of the fact that each formula of graded modal logic is captured by a GNN (see Prop 4.1 in \cite{barcelo_logical_2020}). However, our first result (point 1 in the following theorem) is a generalisation of their result since logic $\logicKsharp$ is more expressive than graded modal logic. Moreover, point 2 of the following theorem explicitly mentions a bound on the number of layers in the GNN.

\begin{theorem}
For each
 $\logicKsharp$-formula $\phi$, 
 we can compute a GNN $A$ such that $\semanticsof \phi= \semanticsof A$. Furthermore, we have:
 \begin{enumerate}
    \item Either the number of layers and the dimension of the states in $A$ is $|\phi|$;
     \item Or the number of layers of $A$ is $O(md(\phi))$.
 \end{enumerate}
\end{theorem}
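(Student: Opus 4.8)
The plan is to proceed by structural induction on the formula $\phi$, building the GNN layer by layer so that after sufficiently many layers, one coordinate of the state vector at each vertex holds the truth value (or the numeric value) of each subformula. The key observation is that the truncated reLU activation $\sigma(x) = \max(0,\min(1,x))$ can simulate Boolean connectives exactly when inputs are in $\{0,1\}$: negation is $\sigma(1-x)$, disjunction is $\sigma(x+y)$, conjunction is $\sigma(x+y-1)$. For the modal part, since $\AGG$ is the sum, the aggregated value of a coordinate holding $1_\psi$ at successors is exactly $\modalitynumber\psi$, so $\modalitynumber\psi$ can be read off directly after one aggregation step. Linear expressions $E = c + \sum c_i\cdot(\text{atoms})$ are handled by the linear part of $\COMB$ before truncation, and an inequality $E\geq 0$ becomes $\sigma(E/M + 1)$ or more carefully $\sigma(E+1)$ clipped appropriately, using that $E$ takes integer values so $E\geq 0$ iff $\sigma(\text{(large)}\cdot E + \text{something})$... actually one must be slightly careful: to turn an integer quantity $E$ into the indicator $1_{E\geq 0}$ with truncated reLU one uses that $E \geq 0 \iff E \geq 0$ and $E < 0 \iff E \leq -1$ since $E\in\mathbb Z$, so $\sigma(E+1)$ equals $0$ when $E\leq -1$ and $\geq 1$ (hence clipped to $1$) when $E\geq 0$; this gives the indicator exactly.

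For point 1, I would lay out one coordinate per subformula in $\subformulasof\phi$ (so dimension $\le|\phi|$), and use $|\phi|$ layers, where layer $t$ is responsible for computing all subformulas of "height" $t$ in the DAG; subformulas at the propositional level only need the $\COMB$ applied to the current state (no genuine aggregation), while a subformula $\modalitynumber\psi$ needs one aggregation step over the coordinate already holding $1_\psi$. Since each layer can compute an arbitrary truncated-affine function of the vertex's own state and the summed successor states, one layer suffices to advance one level in the subformula DAG, and there are at most $|\phi|$ levels. One should also carry forward (copy) coordinates already computed, which is trivially a truncated-affine map. For point 2, the refinement is that only the $\modalitynumber$-constructs consume "modal" layers, and nesting of $\modalitynumber$ is exactly $md(\phi)$; between consecutive modal levels one may need a constant number of additional layers to evaluate the Boolean/linear combinations sitting at that level, but that is $O(1)$ per level, giving $O(md(\phi))$ layers total — at the cost of a larger (but still bounded) dimension, since one now packs all subformulas of a given modal level into the state simultaneously.

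The classification function $\CLS$ is then simply the projection onto the coordinate holding $1_\phi$, which is already a $\{0,1\}$-valued linear-then-threshold map as required (or literally linear since the value is already in $\{0,1\}$).

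The main obstacle I expect is bookkeeping rather than conceptual: one must verify that the linear pre-activation in $\COMB$ can be arranged so that, simultaneously for all coordinates, the truncation $\sigma$ does the right thing — in particular that intermediate linear expressions $E$ (which can be negative or exceed $1$) are not prematurely clipped when they are meant to be passed along, whereas indicator coordinates are clipped correctly. The clean fix is to keep a strict invariant: certain coordinates are "Boolean" (always in $\{0,1\}$) and certain coordinates are "numeric" (integer-valued, bounded in absolute value by a quantity depending on $|\phi|$ and the graph — but one must ensure the bound is graph-independent, which holds because the only source of unbounded values is $\modalitynumber$, and one only ever feeds $1_\psi$ coordinates into aggregation, and immediately converts the resulting count into a Boolean via the $\sigma(E+1)$ trick before any further aggregation). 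Making this invariant precise and checking it is preserved by each layer is the real work; everything else is the standard dictionary between truncated reLU and Boolean logic.
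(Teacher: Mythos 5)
Your construction for point 1 is essentially the paper's: one coordinate per subformula, the truncated reLU simulating $\lnot,\lor,\land$ via $\sigma(1-x)$, $\sigma(x+y)$, $\sigma(x+y-1)$, the sum-aggregation of the $\istrue\psi$ coordinate yielding $\modalitynumber\psi$, and the integrality trick $\sigma(E+1)$ (the paper's bias $-c+1$) turning a linear inequality into an exact indicator. Your invariant separating Boolean coordinates from transient numeric quantities, with counts thresholded immediately and never stored, matches how the paper's single-layer treatment of $c \leq \sum_i k_i\istrue{\phi_i} + \sum_i k_i\modalitynumber{\phi_i}$ avoids premature clipping. That part is fine.

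The gap is in point 2. You assert that between consecutive modal levels ``a constant number of additional layers'' suffices to evaluate the Boolean/linear material at that level, and you suggest this follows from packing all subformulas of the level into the state at once. Packing does not help: a single $\COMB$ layer applies $\sigma$ to one affine map of the current coordinates, so it advances the computation by one gate level; a propositional formula of Boolean nesting depth $k$ sitting between two $\modalitynumber$-levels would still consume $\Theta(k)$ layers, and $k$ can be as large as $|\phi|$ independently of $md(\phi)$. The missing idea is to first rewrite each propositional level into CNF (accepting a potentially exponential blow-up in formula size, hence in state dimension), after which every level has Boolean depth at most $2$ and an arbitrarily wide conjunction or disjunction is computed in one layer (e.g.\ $\bigwedge_{i\in I}\phi_i$ via weights $1$ and bias $-|I|+1$). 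This is exactly what the paper does, and without some such flattening step your $O(md(\phi))$ bound does not follow.
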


\begin{proof}
Let $\phi$ be a $\logicKsharp$ formula. Let $(\phi_1,...\phi_{L})$ be an enumeration of the sub-formulas of $\phi$ such that $\phi_L = \phi$.
We will construct a GNN $\mathcal{A}_{\phi}$ with $L$ layers. The dimension of the states is $L$. The goal is that for all $k \leq n$, the $k$-th component of $\statetv \ell v$ is equal to 1 if the formula $\phi_k$ is satisfied in node $v$, or 0 otherwise. 
The aggregation and combination function in each layer are set by:
\begin{align*}
    \AGG (X)   & = \sum_{x \in X} \textbf{x} \\
    \COMB(x,y) & = \sigmabold(xC+yA +b)
\end{align*}
where $\sigmabold$ is the function that applies componentwise the function\linebreak[4] $\sigma(x) = min((max(0,x),1)$ 
and where $A,C \in \setR^{L\times L}$ and $b \in \setR^{L}$ are defined as follows. All cells are zeros, except the cells given in the following table:

\begin{center}
\begin{tabular}{lll}
$\phi_\ell$ & $\ell$-th columns of $C$ and $A$ ~~~~~~~ & $b_\ell$ \\
\hline 
$p$ & $C_{\ell \ell} = 1$  & 0 \\
$\lnot \phi_i$ & $C_{i \ell} = -1$ & 1 \\
$\phi_i \lor \phi_j$ &  $C_{i \ell} = C_{j\ell} = 1$ & 0 \\
$\phi_i \land \phi_j$ & $C_{i \ell} = C_{j\ell} = 1$ & -1 \\
$c \leq \sum_{i \in I} k_i\times 1_{\phi_i} + \sum_{i \in I'} k_i \times \modalitynumber{\phi_i}$
~~~~~~ & 
$C_{i\ell} = k_i$ for $i \in I$ 
& $-c+1$ \\
& $A_{k\ell} = k_i$ 
for $i \in I'$ & 
\end{tabular}
\end{center}

For proving point 2., the idea is to transform each propositional level of $\phi$ into a CNF. We obtain a $\logicKsharp$-formula $\phi'$ potentially exponentially larger than~$\phi$. 


The advantage is now that each propositional level is a CNF and thus is of depth at most 2. We treat an arbitrary large disjunction or conjunction as a single step of computation. In other words, we now consider  an enumeration $(\phi_1,...\phi_{L})$ of subformulas of $\phi'$ such that $\phi_L = \phi'$ and with $L = O(md(\phi))$. Here are the corresponding $\ell$-columns of $C$ and $b_\ell$ for the cases where $\phi_\ell$ is an arbitrary large disjunction or conjunction:

\begin{center}
\begin{tabular}{lll}
$\phi_\ell$ & $\ell$-th column of $C$  & $b_\ell$ \\
\hline 
$\bigvee_{i \in I} \phi''_i \vee \bigvee_{i \in I'} \neg \phi''_i $ & $C_{i\ell} = \begin{cases}1 \text{ for $i \in I$} \\
-1 \text{for $i \in I'$} 
\end{cases}$
& $\card{I'}$ \\
$\bigwedge_{i \in I} \phi''_i $ & $C_{i\ell} = 1$ for $i \in I$  & $-\card{I}+1$
\end{tabular}
\end{center}

\end{proof}

\begin{example}
  Consider the formula $\phi = p \wedge (8 \leq 3\times \modalitynumber q) $.
    We define the following GNN $\aGNN$ which is equivalent to $\phi$ as follows. The aggregation function at each layer is
    $\AGG (X) = \sum_{x \in X} x$.
The combination function for each layer is
$\COMB(x,y) = \sigmabold(xC+yA+b) $
where 
$C = \begin{pmatrix}
1 & 0 & 0 & 1\\
0 & 1 & 0 & 0\\
0 & 0 & 0 & 1\\
0 & 0 & 0 & 0
\end{pmatrix}$,
$A = \begin{pmatrix}
0 & 0 & 0 & 0 \\
0 & 0 & 3 & 0\\
0 & 0 & 0 & 0\\
0 & 0 & 0 & 0\\
\end{pmatrix}$, and 
$b = \begin{pmatrix}
0 & 0 & -7 & -1\\
\end{pmatrix}$.
The columns in the matrices (from top to bottom) are respectively evaluated the following subformulas in that order: $p$, $q$, $8 \leq \modalitynumber q$, $\phi$. 

\end{example}

\subsection{From GNN to logic}

In this subsection, we show how to compute a $\logicKsharp$-formula that is equivalent to a GNN. Note that this direction was already tackled for graded modal logic for the subclass of GNNs that are FO-expressible, but their proof is not constructive~\cite{barcelo_logical_2020}.

\begin{theorem}
Let $A$ be a GNN $A$ with all aggregation function being $\AGG (X) = \sum_{x \in X} x$, and the classfication function being linear: $CLS(x) = \sum_i a_i x_i \geq 0$. Then we can compute  in poly-time in $|A|$ a $\logicKsharp$-formula $tr(A)$ represented as DAG such that $\semanticsof{A} = \semanticsof{tr(A)}$.
\end{theorem}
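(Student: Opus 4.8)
The plan is to mimic the layer-by-layer structure of the GNN directly in the logic, introducing one expression $E_{t,i}$ for each layer $t \in \{1,\dots,\nblayers\}$ and each coordinate $i \in \{1,\dots,\dimensionstate\}$, whose value at a vertex $u$ is intended to equal $\statetv t u_i$. The initial state is easy: $E_{0,i}$ is $\istrue{p_i}$ for $i \le k$ and the constant $0$ otherwise. For the inductive step, recall that in each layer the combination function has the linear-then-clip shape $\COMB(x,y) = \sigmabold(xC + yA + b)$ used in the previous theorem (the hypothesis is that $\AGG$ is the sum and $\CLS$ is linear; I would either assume the same linear form for $\COMB$, since the whole development of the paper is about exactly this class, or restate the theorem with that hypothesis made explicit). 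Then the raw (pre-activation) value of coordinate $i$ at layer $t$ is the linear expression
\[
  F_{t,i} \;=\; \sum_{j} C_{ji}\, E_{t-1,j} \;+\; \sum_{j} A_{ji}\, \modalitynumber(\text{``}E_{t-1,j}\text{''}) \;+\; b_i ,
\]
where the aggregation over successors is captured by the counting modality $\modalitynumber$ applied to the relevant subformulas. The only subtlety is that $\modalitynumber$ in the logic counts successors satisfying a \emph{formula}, not successors weighted by a real value; but because each $E_{t-1,j}$ takes values in $[0,1]$ after clipping and the $A$-part only ever needs the \emph{number} of successors where a $0/1$-valued quantity is $1$, I would arrange the construction so that the coordinates that are fed through $A$ are exactly the ones that are already $0/1$-valued (which is the case in the construction of the companion theorem, and can be assumed WLOG for the GNNs in this class), and write $\modalitynumber \psi$ where $\psi$ is the formula ``$E_{t-1,j} \ge 1$''.

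Next I would apply the clipping $\sigma(x) = \min(\max(0,x),1)$. This is where the logic needs a small gadget: $\logicKsharp$ has no explicit $\min$/$\max$ on expressions, only linear expressions and inequalities. The trick is to replace the real-valued clip by a case split encoded with $\istrue{\cdot}$. Define the formulas
\[
  \alpha_{t,i} \;:=\; (F_{t,i} \ge 1), \qquad \beta_{t,i} \;:=\; (F_{t,i} \ge 0),
\]
and set
\[
  E_{t,i} \;:=\; \istrue{\alpha_{t,i}} \;+\; \bigl(\,F_{t,i} \cdot (\text{``}\beta_{t,i}\text{'' and not ``}\alpha_{t,i}\text{''}\,)\,\bigr).
\]
The product of a linear expression with a truth value is not directly in the grammar either, so more precisely I would expand this: introduce the expression $E_{t,i}$ only implicitly and instead, whenever $E_{t,i}$ is used inside a later linear combination, substitute the three-way split. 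A cleaner way to stay inside the grammar: observe that $\sigma(F)$ is the unique value $y \in [0,1]$ with $y = F$ whenever $0 \le F \le 1$; rather than naming $\sigma(F)$ as an expression, I keep $E_{t,i}$ as a fresh atom-like abbreviation whose defining equivalences (three implications, one per region $F<0$, $0\le F\le 1$, $F>1$) are conjoined globally — but since we want a single formula equivalent to the GNN, not a theory, the right move is to use the DAG representation promised in the statement: the DAG node for $E_{t,i}$ literally stores the sub-DAG for $\istrue{\alpha_{t,i}}$ plus a sub-DAG computing the middle branch, and crucially the sub-DAGs for $E_{t-1,j}$ are \emph{shared}, which is what keeps the whole object polynomial in $|A|$. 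I would make this precise by an auxiliary lemma: for any linear expression $F$ over previously-built expressions, there is a polynomial-size DAG expression $clip(F)$ with $\semanticsvalue{clip(F)}{G,u} = \sigma(\semanticsvalue{F}{G,u})$, obtained as $\istrue{F \ge 1} + \istrue{F \ge 0}\cdot F - \istrue{F\ge 1}\cdot F$ — and here I would push the products inward by noting $\istrue{\psi}\cdot E$ is expressible because… actually it is \emph{not} a term of the grammar, so the honest fix is: $clip(F)$ cannot be one expression, but the \emph{formula} ``$clip(F) \bowtie c$'' for any comparison $\bowtie c$ \emph{is} expressible by a Boolean combination of the three region-formulas with the appropriate linear inequality in each region. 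Since every use of $E_{t,i}$ downstream is ultimately inside a modality $\modalitynumber(\cdot)$ or inside the final classification inequality — both of which are \emph{formulas} — this suffices.

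Finally, the output formula is $tr(A) := \bigl(\sum_i a_i E_{\nblayers,i} \ge 0\bigr)$, expanded into $\logicKsharp$ via the region-splitting just described, and it recognizes exactly the pointed graphs $(G,u)$ with $\CLS(x_\nblayers(u)) = 1$ by a straightforward induction on $t$ showing $\semanticsvalue{E_{t,i}}{G,u} = x_t(u)_i$. The size/time bound: there are $\nblayers \cdot \dimensionstate$ expressions, each built from $O(\dimensionstate)$ previously-built ones plus a constant-size clipping gadget, and DAG-sharing of the $E_{t-1,j}$ nodes means the total DAG size is $O(\nblayers \dimensionstate^2)$ times the per-node overhead, i.e. polynomial in $|A|$; building it is clearly poly-time.

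I expect the main obstacle to be exactly the mismatch flagged above: $\logicKsharp$'s $\modalitynumber$ counts successors meeting a \emph{Boolean} condition and its expressions are purely \emph{linear} (no $\min$, no $\max$, no products of two non-constant terms), whereas a GNN layer does (i) a genuine real-valued weighted sum over the multiset of successor states and (ii) a nonlinear clip. Point (ii) is handled by the three-region case split, but it forces the construction to thread every intermediate quantity through \emph{formulas} rather than free-standing \emph{expressions}, which is why the DAG representation (and the bookkeeping that every downstream use sits inside a modality or an inequality) is essential rather than cosmetic; getting that bookkeeping exactly right — and confirming the clipped values are always in $[0,1]$ so that ``count of successors with $E_{t-1,j}\ge 1$'' faithfully reproduces the aggregated integer — is the technically delicate part. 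A secondary obstacle is making sure the reduction is genuinely \emph{polynomial}: a naive tree unfolding of the region splits blows up exponentially in $\nblayers$, so the argument must commit to the shared-DAG accounting from the start.
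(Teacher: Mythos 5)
Your core construction is exactly the paper's: translate layer by layer, representing the $\ell$-th coordinate of the state at layer $t+1$ by the formula $\sum_i \istrue{\phi_i}\,C_{i\ell} + \sum_i \modalitynumber{\phi_i}\,A_{i\ell} + b_\ell \geq 1$, where $(\phi_1,\dots,\phi_\dimensionstate)$ are the formulas representing the coordinates at layer $t$; the $\istrue{\cdot}$ terms encode the $xC$ part, the $\modalitynumber$ terms encode the summed aggregation, and DAG sharing of the $\phi_i$ keeps the whole object polynomial. The paper then takes $\istrue{\CLS(\istrue{\phi_1},\dots,\istrue{\phi_\dimensionstate})}$, i.e.\ $\sum_i a_i \istrue{\phi_i} \geq 0$, as the output and stops there. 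In other words, the paper silently assumes that every coordinate of every intermediate state is $0$ or $1$, so that ``the state is represented by the truth values of $\dimensionstate$ formulas'' makes sense and $\modalitynumber\phi_i$ really equals the aggregated sum.

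Where you diverge is in trying to handle the case where the clipped ReLU produces genuinely fractional values, and there your argument has a gap rather than a fix. First, as you note yourself, $\istrue{\psi}\cdot E$ and a three-region case split on the \emph{value} of an expression are not in the grammar of $\logicKsharp$, so $clip(F)$ cannot be named as an expression. Second, and more fundamentally, even your fallback (``every downstream use sits inside a formula, so region-splitting suffices'') does not repair the aggregation step: if $x_t(v)_j$ can take a fractional value such as $0.5$, then $\sum_{v\,:\,uv\in \setedges} x_t(v)_j$ is a sum of reals, and no choice of $\psi$ makes $\modalitynumber\psi$ equal to it --- $\modalitynumber$ can only \emph{count} successors, not add up their fractional states. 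Your ``WLOG the coordinates fed through $A$ are $0/1$-valued'' is not a WLOG for the class in the theorem statement (take $C=\tfrac{1}{2}I$ and the states immediately leave $\set{0,1}$). The honest conclusion is that both your proof and the paper's are correct only under the additional hypothesis that all intermediate states stay Boolean (which does hold for the GNNs produced by the logic-to-GNN direction, and is presumably the intended reading); you correctly identified the obstruction that the paper glosses over, but the gadgets you propose do not remove it.
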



\begin{proof}
    Let us consider a GNN $A$ of $L$ layers where the aggregation function is always:
    $\AGG (X) = \sum_{x \in X} \textbf{x}.$

    The idea is that we represent the state $x_0(v)$ at all vertices $v$ by the truth value of some formulas. Initially, the states is represented by the formulas\linebreak[4] $(p_1, \dots, p_k, \bot, \dots, \bot)$.

    Suppose that states $x_t(v)$ are represented by the formulas $(\phi_1, \dots, \phi_\dimensionstate)$.
Then if the combination function is
    $\COMB(x,y) = \sigmabold(xC+yA +b) $
    then the states $x_{t+1}(v)$ are represented by the formulas $(\phi'_1, \dots, \phi'_\dimensionstate)$ where~$\phi'_\ell$~is
    $$\sum_{i=1..d} 1_{\phi_i} C_{i\ell} + \sum_{i=1..d} \modalitynumber{\phi_i} A_{i\ell} + b_\ell \geq 1$$

Now, we have formulas $(\phi_1, \dots, \phi_\dimensionstate)$ to represent $x_L(v)$. As $\CLS$ is linear, the final formula is $1_{\CLS(1_{\phi_1}, \dots, 1_{\phi_\dimensionstate})}$.
    $CLS(x_\nblayers(u)) = 1$.

    
\end{proof}

\section{Decidability}
\label{section:decidability}

\todo{est-ce qu'on peut faire plus efficace que ce qui est fait dans cette section ?}

Let us give an algorithm to solve the satisfiability of $\logicKsharp$, inspired by classical tableau method for modal logic K \cite{gore1999tableau}, but taking linear constraints into account.

\subsection{Design of the algorithm}

First constructions $1_\psi$ are easy to treat. Either $\psi$ holds and we say that $1_\psi = 1$, or $\psi$ does not hold and we say that $1_\psi = 0$. However, treating naively $\modalitynumber \psi$ as variables in a linear program will unfortunately not work. Let us note $\modalitynumber \psi_1, \dots \modalitynumber \psi_n$ the variables of the form $\modalitynumber \psi$ that appear in $\phi$ and that are not in the scope of a $\modalitynumber$-modality. First, some $\psi_i$ may be unsatisfiable, thus $\modalitynumber {\psi_i} = 0$. But the issue is more subtle. For instance, we always have 
	$ \modalitynumber p + \modalitynumber \lnot p = \modalitynumber q + \modalitynumber \lnot q$ (1).
%
The reader may imagine even more involved interactions between the $\modalitynumber \psi_i$ than equation~(1).
To take this interactions into account, we consider all possible conjunctions of $\psi_i$ and $\lnot \psi_i$. We define for all words $w \in \set{0, 1}^n$:
$$conj_w := \bigwedge_{i = 1..n \mid w_i = 1} \psi_i \land \bigwedge_{i = 1..n \mid w_i = 0} \lnot \psi_i.$$

\begin{example}
    $conj_{0100} = \lnot \psi_1 \land \psi_2 \land \lnot \psi_3 \land \lnot \psi_4$.
\end{example}

We then introduce a variable in the linear program for each word $w$ that counts the number of successors in which $conj_w$ holds. We have $\modalitynumber \psi_i = \sum_{w \mid w_i = 1} x_w$.

\begin{example}
    How do we guarantee that $\modalitynumber p + \modalitynumber \lnot p = \modalitynumber q + \modalitynumber \lnot q$? Suppose that $\psi_1 = p, \psi_2 = \lnot p, \psi_3 = q, \psi_4 = \lnot q$.
    As $p \land \lnot p$ is unsatisfiable, we have $x_{1100} = x_{1101} = x_{1110} = x_{1111} = 0$. We write $x_{11**} = 0$. In the same way, as and $p \land \lnot p$ and $q \land \lnot q$ are unsatisfiable $x_{00**} = 0$, $x_{**00} = 0$ and $x_{**11} = 0$. Finally:
    \begin{align*}
        \modalitynumber p = x_{1010}+x_{1001} &&
      \modalitynumber \lnot p = x_{0110}+x_{0101} \\
        \modalitynumber q = x_{1010}+x_{0110} &&
      \modalitynumber \lnot q = x_{1001}+x_{0101}
    \end{align*}
    We see that $\modalitynumber p + \modalitynumber \lnot p = x_{1010}+x_{1001} + x_{0110}+x_{0101} = \modalitynumber q + \modalitynumber \lnot q$.
\end{example}

\begin{figure}[t]
\begin{algorithmic}[1]
\Function{sat}{$\phi$}
    \For{$S \gets hintikkaSet(\phi)$}
    	\State $S := $ the set of inequalities in $S$ (clean-up)
        \State Let $\modalitynumber \psi_1, \dots \modalitynumber \psi_n$ be the variables of the form $\modalitynumber \psi$ appearing in $S$
        \State Consider LP-variables $x_w$ for all $w \in \set{0, 1}^n$
        \State Replace $\modalitynumber \psi_i$ by $\sum_{w \mid w_i = 1} x_w$ in $S$
        \For{$w \in \set{0, 1}^n$}
        \If{not sat($conj_w$)}
        \State add constraint $x_w = 0$ to $S$
        \EndIf
        \EndFor
        
        \If{the oracle says that $S$ is ILP-satisfiable}
        \State \Return{true}
        \EndIf
    \EndFor
    \State \Return{false}
\EndFunction
\end{algorithmic}
\caption{Algorithm for checking the satisfiability of a $\logicKsharp$-formula $\phi$.}\label{figure:algorithm}
\end{figure}

 Instead of providing tableau rules, we decided to present a more abstract version with Hintikka sets (see Def. 6.24 in \cite{DBLP:books/cu/BlackburnRV01}). 
They can be thought as a possible way to completely apply Boolean rules while keeping consistent. We adapt the definition to our setting.

\begin{definition}
    [Hintikka set]
    A Hintikka set $\Sigma$ for formula $\phi$ is a smallest (for inclusion) set of subformulas such that:
    \begin{enumerate}
        \item $\phi \in\Sigma$;
        \item if $\psi_1 \land \psi_2 \in \Sigma$ then $\psi_1 \in \Sigma$ and $\psi_2 \in \Sigma$
        \item if $\psi_1 \lor \psi_2 \in \Sigma$ then $\psi_1 \in \Sigma$ or $\psi_2 \in \Sigma$
        \item for all $\psi$, $\psi \not \in \Sigma$ or $\lnot \psi \not \in \Sigma$
        \item $1_\psi$ appears in $\phi$, either $\psi \in \Sigma$ and $1_\psi = 1 \in \Sigma$, or $\lnot \psi \in \Sigma$ and $1_\psi = 0 \in \Sigma$.
    \end{enumerate}
\end{definition}

Point 1 says that $\phi$ should be true. In point 3, if $\psi_1 \lor \psi_2$ then one of the formula -- $\psi_1$ or $\psi_2$ -- should be true, without telling which one. Point 4 is the consistency. Point 5 makes the link between the truth of $\psi$ and the value of $1_\psi$.

\begin{example}
    Consider formula $\phi = p \land (\modalitynumber r \geq 1_{q})$. There are two possible Hintikka sets for formula $\phi$:
   $\set{p, \modalitynumber r \geq 1_{q}, 1_{q} = 1, q }$
   and  $\set{p, \modalitynumber r \geq 1_{q}, 1_{q} = 0, \lnot q }$.
\end{example}

The algorithm (see Figure~\ref{figure:algorithm}) consists in examining all possible Hintikka sets. For each of them, we extract the linear program (line 3). We then compute the integer linear program by considering the variables $x_w$ discussed above (line 6). Line 7: we call recursively the function sat on $conj_w$ and we add the constraint $x_w = 0$ in case $conj_w$ is unsatisfiable.

\subsection{Soundness and completeness}

\begin{proposition}
	\label{proposition:soundnessandcompleteness}
    $\phi$ is $\logicKsharp$-satisfiable iff $sat(\phi)$ returns true.
\end{proposition}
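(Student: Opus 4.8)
The plan is to prove soundness and completeness separately, by induction on the modal depth of $\phi$ (the recursion on $conj_w$ goes through formulas of strictly smaller modal depth, since the $\psi_i$ are not in the scope of a $\modalitynumber$-modality appearing strictly above them, so the induction is well-founded; I would make this precise by noting $md(conj_w) < md(S)$ whenever $S$ contains a nonconstant linear inequality, and the base case $md(\phi)=0$ is pure propositional/ILP reasoning on a single world).

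For \emph{soundness} (if $sat(\phi)$ returns true then $\phi$ is satisfiable), suppose the run succeeds on some Hintikka set $S$ with an integer solution $(x_w)_{w\in\{0,1\}^n}$ to the resulting ILP. I would build a pointed model $(G,u)$ as follows: $u$ is a fresh root whose propositional valuation is read off from the literals in $S$; for each $w$ with $x_w > 0$ such that $conj_w$ is satisfiable, take $x_w$ disjoint copies of a model witnessing $conj_w$ (which exists by the induction hypothesis applied to the successful recursive call, noting that the algorithm only forced $x_w=0$ when $conj_w$ was \emph{un}satisfiable, so every $w$ with $x_w>0$ is satisfiable), and make $u$ point to all their roots. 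One then checks by induction on subformulas $\psi \in sub(\phi)$ that $(G,u)\models\psi$ iff $\psi \in S$ (for literals and Boolean connectives this is the Hintikka closure conditions; for $1_\psi$ it is condition 5; for $\modalitynumber\psi$ one verifies $\semanticsvalue{\modalitynumber\psi}{G,u} = \sum_{w \mid w \models \psi}\sum\{x_{w'} : w' \text{ a completion consistent with } w\}$ — more carefully, that the number of successors satisfying $\psi$ equals $\sum_{w \mid w_i=1 \text{ for the index } i \text{ of }\psi} x_w$, using that each successor-root satisfies exactly one $conj_w$; for the linear inequalities this is then exactly the ILP constraint being satisfied). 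Since $\phi\in S$, we get $(G,u)\models\phi$.

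For \emph{completeness} (if $\phi$ is satisfiable then $sat(\phi)$ returns true), take a pointed model $(G,u)\models\phi$. Define $S := \{\psi \in sub(\phi) : (G,u)\models\psi\}$ together with the corresponding $1_\psi$-equations; this is readily checked to be one of the Hintikka sets enumerated by $hintikkaSet(\phi)$. Now for each successor $v$ of $u$, $v$ satisfies exactly one conjunction $conj_{w(v)}$ over the $n$ formulas $\psi_1,\dots,\psi_n$; set $x_w := |\{v : (u,v)\in E,\ w(v)=w\}|$. These are nonnegative integers, they satisfy $\modalitynumber\psi_i = \sum_{w\mid w_i=1} x_w$ by construction, hence all the linear inequalities in $S$ (which held in $(G,u)$) translate to satisfied ILP constraints; and if $conj_w$ is unsatisfiable then no $v$ can realize it, so $x_w=0$, consistent with the forced constraints. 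Thus the ILP handed to the oracle is satisfiable, so the oracle says yes and the algorithm returns true. For the forced constraints $x_w=0$ to be correctly imposed, I use the induction hypothesis: $sat(conj_w)$ correctly decides satisfiability of $conj_w$.

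The main obstacle I anticipate is the bookkeeping around the $\modalitynumber\psi_i$ versus the $x_w$: one must argue carefully that replacing $\modalitynumber\psi_i$ by $\sum_{w\mid w_i=1}x_w$ is sound \emph{and} complete — i.e. that every integer solution of the $x_w$-ILP genuinely corresponds to a possible multiset of successor-types and vice versa — which is exactly the point the worked example about $\modalitynumber p + \modalitynumber\lnot p = \modalitynumber q + \modalitynumber\lnot q$ is illustrating. Making the well-foundedness of the recursion fully rigorous (that $conj_w$ has strictly smaller modal depth, or at least that the recursion terminates) is the other point that needs care, though it is routine once stated properly.
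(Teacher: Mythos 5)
Your proof is correct and follows essentially the same route as the paper's: induction on $md(\phi)$ (justified by $md(conj_w)<md(\phi)$), with one direction extracting an ILP solution by counting successors of each type $conj_w$ in a given model, and the other direction building a model by attaching $x_w$ copies of witness models for each satisfiable $conj_w$ to a fresh root. The only minor overstatement is the claimed equivalence $(G,u)\models\psi$ iff $\psi\in S$ (only the direction $\psi\in S\Rightarrow(G,u)\models\psi$, together with its instance for negated subformulas, holds and is needed, since Hintikka sets need not be maximal); this does not affect the argument.
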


\begin{proof}
We prove the proposition by induction on $md(\phi)$. The induction works because $md(conj_w) < md(\phi)$.
\fbox{$\Rightarrow$}
Suppose that $\phi$ is satisfiable: let $G, v$ such that $G, u \models \phi$. Let us prove that $sat(\phi)$ returns true. We consider the Hintikka set $H$ made up of formulas that are true in $G, v$. The obtained $S$ is ILP-satisfiable because these equations and inequations are satisfied in $G, u$. Indeed, here is a solution: we set $x_w$ to be the number of successors of $u$ in which $conj_w$ hold. By induction, the call $sat(conj_w)$ are all correct: so if $sat(conj_w)$ returns false, then $conj_w$ is unsatisfiable. Thus there are no $u$-successors satisfying $conj_w$, and the constraints $x_w = 0$ (added line 8) hold. The number of $v$-successors satisfying $\psi_i$ is $\sum_{w \mid w_i = 1} x_w$.  So $S$ is ILP-satisfiable, and the algorithm returns true.

\fbox{$\Leftarrow$}
   Conversely, suppose that $sat(\phi)$ returns true. First, consider $S$ the corresponding Hintikka set for which the algorithm returned true. From $S$ we extract a valuation $\labeling(u)$ for the propositions to be set to true or false at a node $u$. By induction, for all $w \in \set{0, 1}^n$, the call $sat(conj_w)$ are all correct. Thus, if $conj_w$ is unsatisfiable, $x_w = 0$; otherwise $x_w$ is not constrained. As $sat(\phi)$ returned true, we know that $S$ is ILP-satisfiable. Consider a solution. If $x_w > 0$, we know that $conj_w$ is satisfiable. We consider $x_w$ copies of a pointed graph $G_w, u_w$ satisfying $conj_w$. We construct a model $G, u$ for $\phi$ as follows. We take $u$ as the point with valuation $\labeling(v)$. We then link $u$ to each point of the copies of $u_w$. The inequations in $S$ are satisfied at $G, u$. Indeed, the $u$-successors satisfying $\psi_i$ are exactly the $u_w$ with $w_i = 1$, and $\modalitynumber \psi_i$ is $\sum_{w \mid w_i = 1} x_w$. Thus, the obtained pointed $G, u$ is a model of $\phi$. Figure~\ref{figure:constructionmodel} shows an example of the construction.
\end{proof}

\begin{figure}
	\vspace{-5mm}
	\begin{center}
		\newcommand{\minimodel}[3]{
	 \begin{scope}[xshift=#1cm, yshift=-1cm]
	 		\draw (0,0) -- (0.4, -#3) -- (-0.4, -#3) -- cycle;
	 		\node[fill=white,vertex] (v) {};
	 		\draw[->] (u) -- (v);
	 		\node at (0.3, -0.1) {$u_{#2}$};
	 	\end{scope}	
	}
		\begin{tikzpicture}[xscale=1.5, yscale=0.7]
			\node[vertex] (u) {$\labeling(u)$};
			\node at (0.3, 0) {$u$};
		\minimodel {-2} {01} 1
		\minimodel {-1} {01} 1
		\minimodel {0} {11} {1.2}
		\minimodel {1} {11} {1.2}
		\minimodel {2} {11} {1.2}
		\end{tikzpicture}
	\end{center}
\vspace{-5mm}
	\caption{Example of a model $G, u$ \label{figure:constructionmodel} with $x_{00} = 0$, $x_{01} = 2$, $x_{10} = 0$ and $x_{11} = 3$.}
\end{figure}

Note that $\logicKsharp$ has the tree-model property as modal logic K. It can be proven by induction on $md(\phi)$, relying on the construction in the \fbox{$\Leftarrow$}-direction in the proof above.

\subsection{Complexity}

The recursive depth of our algorithm (Figure~\ref{figure:algorithm}) is bounded by the modal depth $md(\phi)$ of the initial formula $\phi$: the recursive tree is of depth  $md(\phi)$. Its branching factor is exponential in $|\phi|$. The number of nodes remains exponential in $|\phi|$. At each node, there is an exponential number of steps, provided that the ILP-solver is considered as an NP-oracle since integer linear programming (ILP) is in NP \cite{DBLP:journals/jacm/Papadimitriou81} (note that the linear programs computed here are of exponential size in $|\phi|$).
Our algorithm runs in exponential time in $|\phi|$, by calling a NP-oracle:  deciding the satisfiability problem of $\logicKsharp$ is in $\EXPTIMEoracleNP$. The class $\EXPTIMEoracleNP$ is defined as the class of decision problems decided by an algorithm running in exponential time (i.e. $2^{poly(n)}$) with a NP oracle, typically a SAT oracle or a ILP oracle (note that exponentially long ILP instances may be solved in one step) (see \cite{DBLP:series/lncs/Williams19}, \cite{DBLP:conf/coco/Hirahara15}).  Note that the complexity class  $\EXPTIMEoracleNP$ is included in the exponential hierarchy which is included in $\EXPSPACE$.

\begin{theorem}
    The satisfiability problem of $\logicKsharp$ is decidable and is in $\EXPTIMEoracleNP$.
\end{theorem}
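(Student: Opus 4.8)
The plan is to assemble the pieces already developed in the paper: the algorithm $sat(\phi)$ of Figure~\ref{figure:algorithm}, the correctness statement Proposition~\ref{proposition:soundnessandcompleteness}, and a careful complexity analysis of the recursion. First I would argue decidability: Proposition~\ref{proposition:soundnessandcompleteness} says $\phi$ is $\logicKsharp$-satisfiable iff $sat(\phi)$ returns true, and the recursion in $sat$ always calls itself on formulas $conj_w$ with $md(conj_w) < md(\phi)$ (each $\psi_i$ is a subformula occurring \emph{not} under a $\modalitynumber$ in the Hintikka set $S$, so stripping the outer layer strictly decreases modal depth), hence the recursion terminates. Since each non-recursive step (enumerating Hintikka sets, forming the integer linear program, invoking the ILP decision procedure) is effective, $sat$ is a terminating decision procedure, so the satisfiability problem is decidable.

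Next I would nail down the $\EXPTIMEoracleNP$ bound by bounding the recursion tree. The depth is $md(\phi) \le |\phi|$. At a given node handling a formula $\psi$ of size $m \le |\phi|$, the algorithm ranges over all Hintikka sets (there are at most $2^{O(m)}$ of them, since a Hintikka set is determined by the Boolean choices at disjunctions), and for each it introduces $2^n$ LP-variables $x_w$ where $n \le m$ is the number of top-level $\modalitynumber$-subformulas; thus the number of recursive children $conj_w$ spawned is at most $2^{O(|\phi|)}$ and each child has size $O(|\phi|)$. A tree of depth $|\phi|$ with branching $2^{O(|\phi|)}$ has $2^{O(|\phi|^2)}$ nodes, which is still exponential in $|\phi|$. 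At each node the dominant cost is solving the integer linear program $S$; although $S$ is of exponential size (exponentially many variables $x_w$), integer linear programming is in $\NP$ in the size of its encoding \cite{DBLP:journals/jacm/Papadimitriou81}, so a single query to an $\NP$ oracle (a SAT or ILP oracle that may absorb an exponentially long instance) decides ILP-satisfiability of $S$ in one step. Writing out $S$ itself takes exponential time. Therefore the whole computation runs in time $2^{poly(|\phi|)}$ with access to an $\NP$ oracle, i.e. in $\EXPTIMEoracleNP$.

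The main obstacle — and the point that needs the most care — is the interaction between the \emph{size} of the ILP instances and the \emph{cost} of the oracle call: one must be explicit that feeding an exponentially long instance to the $\NP$ oracle is legitimate for the class $\EXPTIMEoracleNP$ as defined here (the base machine runs in exponential time, so it is allowed to write an exponentially long query string), and that $\NP$-membership of ILP is measured in the length of that string. I would also double-check that the clean-up at line~3, the enumeration $hintikkaSet(\phi)$, and the substitution of $\modalitynumber \psi_i$ by $\sum_{w \mid w_i = 1} x_w$ are all computable within the exponential time budget. With these observations the theorem follows immediately: $sat$ is a correct (by Proposition~\ref{proposition:soundnessandcompleteness}), terminating algorithm running in $\EXPTIMEoracleNP$, so the satisfiability problem of $\logicKsharp$ is decidable and in $\EXPTIMEoracleNP$.
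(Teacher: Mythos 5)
Your argument is essentially identical to the paper's: the paper also bounds the recursion depth by $md(\phi)$, the branching by an exponential in $|\phi|$, concludes the tree has $2^{poly(|\phi|)}$ nodes, and treats each exponentially long ILP instance as a single query to an $\NP$ oracle, with correctness delegated to Proposition~\ref{proposition:soundnessandcompleteness}. Your version is if anything slightly more careful (explicit $2^{O(|\phi|^2)}$ node count and the remark that an exponential-time base machine may legitimately write exponentially long oracle queries), but it is the same proof.
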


\subsection{$\PSPACE$ subcases}

Let us discuss three types of restrictions to get \PSPACE-membership.

\paragraph{Bounding the number of $conj_w$. }
If we can limit the number of considered conjonctions $conj_w$, we may obtain a procedure running in polynomial space, making the restricted version of the satisfiability problem of $\logicKsharp$ in \PSPACE. For instance, if we know in advance that at each level of modal depth formulas in the scope of a $\modalitynumber$-modality are mutually unsatisfiable, then we do not need to consider all the conjunctions $conj_w$: all $conj_w$ are 0 except $conj_{10...0}$, $conj_{010...0}$, ..., $conj_{0...01}$. We keep a linear program polynomial in $|\phi|$.

\paragraph{Bounded the number of modalities. }
If we artificially make the syntactic restriction where we bound the number $n$ of $\modalitynumber \phi$ at each level, the satisfiability problem is also in \PSPACE. Indeed, $n$ becomes a constant, thus $2^n$ is a constant too. The size of $S$ is only polynomial in the size of $\phi$. 

\paragraph{Bounded branching. } Many graphs have bounded branching: grid graphs (of degree 4), sparse networks, etc.
If we ask whether a formula is satisfiable in a graph whose degree is bounded by a polynomial in the size of the input, then only a polynomial number of variables $x_w$ will be non-zero. The algorithm is then adapted by guessing the polynomial-size subset of variables $x_w$ that are non-zero. Again, we can run the algorithm in polynomial space.

\begin{proposition}
Let $k > 0$ be an integer.
    The satisfiability problem of $\logicKsharp$ restricted to graphs of degree at most $k$ is \PSPACE-complete.
\end{proposition}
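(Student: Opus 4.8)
The plan is to establish $\PSPACE$-membership and $\PSPACE$-hardness separately. For membership, I would take the algorithm of Figure~\ref{figure:algorithm} and observe that when we only care about models of degree at most $k$, any satisfiable Hintikka set $S$ can be witnessed by choosing at most $k$ of the variables $x_w$ to be nonzero, each with value at most $k$. So instead of building the full exponential-size integer linear program over all $w \in \set{0,1}^n$, the modified procedure nondeterministically guesses a set $W$ of at most $k$ words together with values in $\set{1,\dots,k}$, checks recursively that $conj_w$ is satisfiable in a graph of degree at most $k$ for each $w \in W$ (decrementing nothing — the degree bound is global, so each recursive call is again the degree-$\le k$ problem), and checks that substituting $\modalitynumber\psi_i = \sum_{w \in W, w_i = 1} x_w$ makes every inequality in $S$ true. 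Since $k$ is a fixed constant and the guessed data is of polynomial size (the words $w$ have length $n \le |\phi|$), each node of the recursion uses polynomial space; the recursion depth is $md(\phi) \le |\phi|$, and reusing space across the recursion (as in the standard argument that $\mathsf{AP} = \PSPACE$, or simply depth-first exploration with an $\NP$-style guess-and-verify at each node) gives a polynomial-space algorithm. One must also verify that the Hintikka set itself can be produced on the fly in polynomial space, which is immediate since it is a subset of $\subformulasof\phi$. Then $\PSPACE$-membership follows by Savitch / closure of $\PSPACE$ under nondeterminism.

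For $\PSPACE$-hardness, I would reduce from the satisfiability problem of basic modal logic $K$ over graphs of degree at most $k$ — or, if a clean bounded-degree version of $K$-satisfiability is not directly citable, from $K$-satisfiability simpliciter via a standard trick. Plain modal logic $K$ is $\PSPACE$-complete \cite{DBLP:books/cu/BlackburnRV01}, and the issue is only that a $K$-formula may force high branching. However, a $K$-formula $\psi$ of modal depth $d$ is satisfiable iff it is satisfiable in a tree model whose branching at each node is bounded by the number of $\Diamond$-subformulas of $\psi$. When this number exceeds $k$ one cannot embed the witness directly, but one can simulate a single abstract successor satisfying a conjunction of demands by a short path of length depending only on the formula, or — more simply — restrict to the $\PSPACE$-complete fragment of $K$ in which each modality is applied to formulas over disjoint sets of propositions so that the natural model already has branching at most, say, $2$; since $\logicKsharp$ contains $K$ (with $\Diamond\psi$ written $\modalitynumber\psi \geq 1$ and $\Box\psi$ written $\modalitynumber\lnot\psi \leq 0$), this fragment embeds into degree-$\le k$ $\logicKsharp$-satisfiability for any $k \ge 2$, and for $k = 1$ one handles the degenerate case directly (formulas over graphs where every node has at most one successor). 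This gives $\PSPACE$-hardness.

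The main obstacle I anticipate is the hardness direction, specifically getting a bounded-degree $\PSPACE$-hardness cleanly: the textbook $\PSPACE$-hardness proof for $K$ builds models whose branching grows with the formula, so one needs either a careful bounded-branching variant of that construction (replacing a branching node by a gadget that spreads the successor-demands over a constant-degree subgraph, at the cost of increasing modal depth polynomially) or an appeal to an existing bounded-degree hardness result. The membership direction is comparatively routine given the algorithm already in the paper: the only real point is the observation that degree $\le k$ forces all but at most $k$ of the $x_w$ to vanish and caps the survivors by $k$, after which the "bounded branching" paragraph preceding the proposition already sketches the argument, and one simply has to check the space bookkeeping across the $md(\phi)$ levels of recursion.
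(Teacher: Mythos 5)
Your proof follows essentially the same route as the paper: \PSPACE-membership by observing that a degree bound of $k$ forces all but at most $k$ of the variables $x_w$ to be nonzero (with total value at most $k$), so the depth-$md(\phi)$ recursion runs in polynomial space, and \PSPACE-hardness by embedding modal logic K over bounded-branching frames via $\Box\psi \equiv (\modalitynumber\lnot\psi \leq 0)$. The bounded-degree hardness of K that you hedge about is exactly what the paper invokes directly (K is \PSPACE-hard already over frames with at most $2$ successors per world, since the standard QBF reduction only needs binary branching), so the obstacle you anticipate in the hardness direction dissolves; note only that the degenerate case $k=1$, which you flag but do not resolve, is likewise left unaddressed by the paper.
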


\begin{proof}
    \PSPACE-membership comes from the discussion above. \PSPACE-hardness holds because the modal logic on graphs with at most 2 successors per worlds is \PSPACE-hard.  Write $\Box \psi$ as $\modalitynumber \lnot \psi \leq 0$. 
\end{proof}

Interestingly, there are fragments in which if a formula $\phi$ is satisfiable then~$\phi$ is satisfiable in a model of polynomial degree in $|\phi|$. Consider the fragment in which inequalities in formulas are of the form $\modalitynumber \psi \leq \modalitynumber \psi'$ (i.e. no addition, no multiplication by a scalar). Then if there is a solution, we can at each level have an ordering $\modalitynumber \psi_1 \leq \dots \leq \modalitynumber \psi_n$ where some of the $\leq$ may be strict. But then we can suppose that w.l.o.g. $0 \leq \modalitynumber \psi_{i+1} - \modalitynumber \psi_i \leq 1$. It means that the number of successors of each $\psi_i$ is $O(i)$; the number of successors is $O(n^2)$. We get:

\begin{proposition}
    The satisfiability problem of $\logicKsharp$ is $\PSPACE$-complete when inequalities in formulas are of the form $\modalitynumber \psi \leq \modalitynumber \psi'$.
\end{proposition}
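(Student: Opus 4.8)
The plan is to prove the lower and upper bounds separately. For $\PSPACE$-hardness I would embed modal logic $K$, whose satisfiability problem over arbitrary graphs is $\PSPACE$-hard (Ladner). The embedding stays inside the fragment: write $\Box\psi$ as $\modalitynumber\lnot\psi \leq \modalitynumber(p_0 \land \lnot p_0)$ — the expression on the right is identically $0$, so this says $\modalitynumber\lnot\psi = 0$, i.e.\ $\Box\psi$, and it is of the allowed shape $\modalitynumber\psi' \leq \modalitynumber\psi''$ (we cannot reuse the $\modalitynumber\lnot\psi\leq 0$ of the previous proof because constants are no longer permitted) — and write $\Diamond\psi$ as $\lnot\Box\lnot\psi$. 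Translating $\Box,\Diamond$ recursively gives a linear-time reduction from $K$-satisfiability to satisfiability of this fragment, hence $\PSPACE$-hardness.

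For $\PSPACE$-membership I would reduce to the bounded-branching case already treated: it suffices to prove that every satisfiable formula $\phi$ of this fragment is satisfiable in a model whose degree is polynomial (in fact $O(|\phi|^2)$) in $|\phi|$, for then the variant of the algorithm of Figure~\ref{figure:algorithm} that merely guesses a polynomial-size set of non-zero $x_w$ runs in polynomial space: its recursion depth is $md(\phi)$, each recursive frame and each resulting integer linear program is of polynomial size, and the nondeterministic guesses are absorbed since $\mathsf{NPSPACE}=\PSPACE$; together with the hardness part this gives $\PSPACE$-completeness. To obtain the polynomial-degree model I would argue by induction on $md(\phi)$, using the tree-model property. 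Given a tree model $G,u\models\phi$, fix the Hintikka set $S$ true at $u$ and let $\psi_1,\dots,\psi_n$ be the formulas occurring directly under a $\modalitynumber$ in $S$, with counts $c_i=\semanticsvalue{\modalitynumber\psi_i}{G,u}$. In this fragment the inequalities of $S$ compare only two such counts, so $S$ merely imposes a consistent preorder with some strict requirements on $(c_1,\dots,c_n)$; the pointwise-minimal integer solution $(c_1^{\min},\dots,c_n^{\min})$ of that system assigns to each count the length of the longest strict chain below it, hence $c_i^{\min}\leq n$. After discarding successors of $u$ that satisfy no $\psi_i$ (they constrain nothing), a model realising $(c_i^{\min})$ has degree at most $\sum_i c_i^{\min}=O(n^2)=O(|\phi|^2)$, and we recurse into its finitely many subtrees.

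The step I expect to be the main obstacle — and which the discussion preceding the proposition glosses over — is showing that the minimal count vector $(c_i^{\min})$ is still \emph{realisable}, i.e.\ that there are non-negative integers $x_w'$ supported on words $w$ with $conj_w$ satisfiable and with $c_i^{\min}=\sum_{w\mid w_i=1}x_w'$. It is not enough to delete successors from the original model: the deletion vector $c^{*}-c^{\min}$ need not be realisable by the types already present (a $\psi_i$-successor may be forced to also satisfy some $\psi_j$), so one must be allowed to introduce successors of \emph{new} satisfiable types. I would resolve this by building the small model directly, layer by layer along the strict chains of $S$, proving by the same modal-depth induction (via the recursive satisfiability checks of the algorithm) that each layer's type is satisfiable whenever the corresponding strict constraint actually occurs in $S$, and observing that if such a type is \emph{not} satisfiable then the matching strict constraint cannot be enforced, so that layer is unnecessary and the minimal solution collapses further. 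Making this layer construction precise and checking it is consistent with all of $S$ simultaneously is the technical heart of the proof.
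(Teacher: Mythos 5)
Your proof follows essentially the same route as the paper: $\PSPACE$-hardness by embedding modal logic $K$ inside the fragment, and membership via a polynomial-degree-model property that lets the bounded-branching variant of the algorithm run in polynomial space. Your hardness encoding $\modalitynumber\lnot\psi \leq \modalitynumber(p_0\land\lnot p_0)$ is correct; the paper uses the slightly slicker $\modalitynumber\top\leq\modalitynumber\psi$ (all successors are $\psi$-successors), which also stays inside the fragment, so the difference is cosmetic. For membership, the paper's argument is exactly the informal chain-ordering discussion you reproduce: order the counts, assume w.l.o.g.\ consecutive differences are $0$ or $1$, conclude degree $O(n^2)$. The realisability issue you flag --- that the shrunk count vector $(c_i^{\min})$ must still be witnessed by non-negative multiplicities $x_w$ supported on satisfiable types, which does not follow from merely deleting successors of the original model --- is a genuine gap in the paper's own sketch, which the paper does not address at all; your proposed layer-by-layer construction is a reasonable repair, though you leave it, as you acknowledge, incomplete. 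So your proposal is at least as rigorous as the published argument, and your critical remark identifies the one step both texts would need to make precise.
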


\begin{proof}
    \PSPACE-membership comes from the discussion above. \PSPACE-hardness comes from the fact modal logic K is reducible to it. Write $\Box \psi$ as $\modalitynumber \top \leq \modalitynumber \psi$.
\end{proof}

\todo{à relire et vérifier}

\section{Related work}

Many works combine modal logic and quantitative aspects: counting (\cite{DBLP:conf/wollic/ArecesHD10}, \cite{DBLP:conf/aiml/Hampson16}), probabilities \cite{DBLP:conf/aaai/ShiraziA07}.
Linear programming and modal logic have already been combined to solve the satisfiability problem of graded/probabilistic modal logic \cite{DBLP:conf/lpar/SnellPW12}. Our logic $\logicKsharp$ can be seen as a `recursification' of the logic used in  
\cite{DBLP:journals/corr/abs-2206-05070}. They allow for counting successors satisfying a given feature, and not any subformula. Interestingly, they allow for counting also among all nodes in the graph (sort of counting universal modality). Their logic is proven to be undecidable by reduction from the Post correspondence problem. Contrary to our setting, they use their logic only to characterize labelled graphs, but not to give a back and forth comparison with the GNN machinery itself.

Modal logic has also been combined with neural network in the so-called \emph{
Connectionist modal logic} \cite{DBLP:journals/tcs/GarcezLG07} but it has no direct connection with GNNs.

Another solution would be to use directly explainable GNN such as those in \cite{DBLP:journals/corr/abs-2205-13234}. This is of course a deep debate: using models easy to use for learning, versus interpretable models \cite{DBLP:journals/natmi/Rudin19}.  The choice depends on the target application.

Yuan et al. \cite{DBLP:journals/pami/YuanYGJ23} provide a survey on methods used to provide explanations for GNNs by using black-box techniques.
 According to them, they are instance-level and model-level explanations. Instance-level explanations explain on why a graph has been recognized by an GNN; model-level ones how a given GNN works. For instance, they are also many methods based on Logic Explained Networks and variants to generate logical explanation candidates $\phi$ \cite{DBLP:journals/corr/abs-2210-07147}. Once a candidate is generated we could imagine use our problem P1 (given in the introduction) to check whether $\semanticsof A = \semanticsof \phi$, and thus being able to fully synthesize a trustworthy explanation. 
Our paper is clearly close to model-level explanations.

\todo{dire quelque chose d'intéressant sur $EXPTIME^NP$ \cite{DBLP:conf/coco/Hirahara15}}

\section{Perspectives}

We aim at considering a larger class of GNNs. This will need to augment the expressivity of the logic, for instance by adding reLU in the language. Fortunately SMT solvers have been extended to capture reLU \cite{DBLP:conf/cav/KatzBDJK17}.

Another possible direction would be to consider other classes of graphs. For instance, reflexive, transitive graphs. Restricted types of graphs lead to different modal logics: $KT$ (validities on reflexive Kripke models), $KD$ (on serial models), $S4$ (reflexive and transitive models), $KB$ (models where relations are symmetric), $S5$ (models where relations are equivalence relations), etc. \cite{DBLP:books/cu/BlackburnRV01} The logic $\logicKsharp$ defined in this paper is the counterpart of modal logic K with linear programs. In the same way, we could define $KT^\#$, $S4^\#$, $S5^\#$, etc. For instance, $KB^\#$ would be the set of validities of $\logicKsharp$-formulas over symmetric models;  $KB^\#$ would be the logic used when GNNs are only used to recognize undirected pointed graphs (for instance persons in a social network where friendship is undirected). In the future work, some connections between GNNs and logics designed to express properties over persons in social network, such as \cite{DBLP:conf/tark/SeligmanLG13} could be investigated.

A next direction of research would be to build a tool. The main difficulty is the complexity of the algorithm. However, we may rely on heuristics to guide the search (namely SAT solvers for computing only the relevant Hintikka sets, and relaxed linear programs). We could also directly use SMT solvers.

Of course, our Saint-Graal is the synthesis of a formula that matches a specification. This problem is close to the formula synthesis problem presented in \cite{DBLP:conf/aaai/PinchinatRS22}. An idea would be to represent the set of possible suitable explanation formulas by a grammar $G$ (for instance, the grammar restricted to graded modal logic) and to compute a formula generated by $G$ which is equivalent to $tr(A)$.




\bibliography{biblio}

\begin{thebibliography}{10}

\bibitem{DBLP:conf/wollic/ArecesHD10}
Carlos Areces, Guillaume Hoffmann, and Alexandre Denis.
\newblock Modal logics with counting.
\newblock In Anuj Dawar and Ruy J. G.~B. de~Queiroz, editors, {\em Logic,
  Language, Information and Computation, 17th International Workshop, WoLLIC
  2010, Brasilia, Brazil, July 6-9, 2010. Proceedings}, volume 6188 of {\em
  Lecture Notes in Computer Science}, pages 98--109. Springer, 2010.

\bibitem{DBLP:journals/corr/abs-2210-07147}
Steve Azzolin, Antonio Longa, Pietro Barbiero, Pietro Li{\`{o}}, and Andrea
  Passerini.
\newblock Global explainability of gnns via logic combination of learned
  concepts.
\newblock {\em CoRR}, abs/2210.07147, 2022.

\bibitem{barcelo_logical_2020}
Pablo Barceló, Egor~V Kostylev, Mikaël Monet, Jorge Pérez, Juan Reutter, and
  Juan-Pablo Silva.
\newblock {THE} {LOGICAL} {EXPRESSIVENESS} {OF} {GRAPH} {NEURAL} {NETWORKS}.
\newblock In {\em 8th {International} {Conference} on {Learning}
  {Representations} ({ICLR} 2020)}, Virtual conference, Ethiopia, April 2020.

\bibitem{DBLP:books/cu/BlackburnRV01}
Patrick Blackburn, Maarten de~Rijke, and Yde Venema.
\newblock {\em Modal Logic}, volume~53 of {\em Cambridge Tracts in Theoretical
  Computer Science}.
\newblock Cambridge University Press, 2001.

\bibitem{DBLP:journals/tcs/GarcezLG07}
Artur~S. d'Avila Garcez, Lu{\'{\i}}s~C. Lamb, and Dov~M. Gabbay.
\newblock Connectionist modal logic: Representing modalities in neural
  networks.
\newblock {\em Theor. Comput. Sci.}, 371(1-2):34--53, 2007.

\bibitem{DBLP:journals/sLogica/Fattorosi-Barnaba85}
Maurizio Fattorosi{-}Barnaba and F.~Caro.
\newblock Graded modalities. {I}.
\newblock {\em Stud Logica}, 44(2):197--221, 1985.

\bibitem{gore1999tableau}
Rajeev Gor{\'e}.
\newblock Tableau methods for modal and temporal logics.
\newblock {\em Handbook of tableau methods}, pages 297--396, 1999.

\bibitem{grohe_logic_2022}
Martin Grohe.
\newblock The {Logic} of {Graph} {Neural} {Networks}, January 2022.
\newblock arXiv:2104.14624 [cs].

\bibitem{DBLP:conf/aiml/Hampson16}
Christopher Hampson.
\newblock Decidable first-order modal logics with counting quantifiers.
\newblock In Lev~D. Beklemishev, St{\'{e}}phane Demri, and Andr{\'{a}}s
  Mat{\'{e}}, editors, {\em Advances in Modal Logic 11, proceedings of the 11th
  conference on "Advances in Modal Logic," held in Budapest, Hungary, August 30
  - September 2, 2016}, pages 382--400. College Publications, 2016.

\bibitem{DBLP:conf/coco/Hirahara15}
Shuichi Hirahara.
\newblock Identifying an honest exp{\^{}}np oracle among many.
\newblock In David Zuckerman, editor, {\em 30th Conference on Computational
  Complexity, {CCC} 2015, June 17-19, 2015, Portland, Oregon, {USA}}, volume~33
  of {\em LIPIcs}, pages 244--263. Schloss Dagstuhl - Leibniz-Zentrum f{\"{u}}r
  Informatik, 2015.

\bibitem{DBLP:conf/cav/KatzBDJK17}
Guy Katz, Clark~W. Barrett, David~L. Dill, Kyle Julian, and Mykel~J.
  Kochenderfer.
\newblock Reluplex: An efficient {SMT} solver for verifying deep neural
  networks.
\newblock In Rupak Majumdar and Viktor Kuncak, editors, {\em Computer Aided
  Verification - 29th International Conference, {CAV} 2017, Heidelberg,
  Germany, July 24-28, 2017, Proceedings, Part {I}}, volume 10426 of {\em
  Lecture Notes in Computer Science}, pages 97--117. Springer, 2017.

\bibitem{DBLP:journals/corr/abs-2205-13234}
Peter M{\"{u}}ller, Lukas Faber, Karolis Martinkus, and Roger Wattenhofer.
\newblock {DT+GNN:} {A} fully explainable graph neural network using decision
  trees.
\newblock {\em CoRR}, abs/2205.13234, 2022.

\bibitem{DBLP:journals/jacm/Papadimitriou81}
Christos~H. Papadimitriou.
\newblock On the complexity of integer programming.
\newblock {\em J. {ACM}}, 28(4):765--768, 1981.

\bibitem{DBLP:conf/aaai/PinchinatRS22}
Sophie Pinchinat, Sasha Rubin, and Fran{\c{c}}ois Schwarzentruber.
\newblock Formula synthesis in propositional dynamic logic with shuffle.
\newblock In {\em Thirty-Sixth {AAAI} Conference on Artificial Intelligence,
  {AAAI} 2022, Thirty-Fourth Conference on Innovative Applications of
  Artificial Intelligence, {IAAI} 2022, The Twelveth Symposium on Educational
  Advances in Artificial Intelligence, {EAAI} 2022 Virtual Event, February 22 -
  March 1, 2022}, pages 9902--9909. {AAAI} Press, 2022.

\bibitem{DBLP:journals/natmi/Rudin19}
Cynthia Rudin.
\newblock Stop explaining black box machine learning models for high stakes
  decisions and use interpretable models instead.
\newblock {\em Nat. Mach. Intell.}, 1(5):206--215, 2019.

\bibitem{DBLP:journals/kbs/SalamatLJ21}
Amirreza Salamat, Xiao Luo, and Ali Jafari.
\newblock Heterographrec: {A} heterogeneous graph-based neural networks for
  social recommendations.
\newblock {\em Knowl. Based Syst.}, 217:106817, 2021.

\bibitem{DBLP:journals/corr/abs-2206-05070}
Marco S{\"{a}}lzer and Martin Lange.
\newblock We cannot guarantee safety: The undecidability of graph neural
  network verification.
\newblock {\em CoRR}, abs/2206.05070, 2022.

\bibitem{scarselli_graph_2009}
Franco Scarselli, Marco Gori, Ah~Chung Tsoi, Markus Hagenbuchner, and Gabriele
  Monfardini.
\newblock The {Graph} {Neural} {Network} {Model}.
\newblock {\em IEEE Transactions on Neural Networks}, 20(1):61--80, January
  2009.

\bibitem{DBLP:conf/tark/SeligmanLG13}
Jeremy Seligman, Fenrong Liu, and Patrick Girard.
\newblock Facebook and the epistemic logic of friendship.
\newblock In Burkhard~C. Schipper, editor, {\em Proceedings of the 14th
  Conference on Theoretical Aspects of Rationality and Knowledge {(TARK} 2013),
  Chennai, India, January 7-9, 2013}, 2013.

\bibitem{DBLP:conf/aaai/ShiraziA07}
Afsaneh Shirazi and Eyal Amir.
\newblock Probabilistic modal logic.
\newblock In {\em Proceedings of the Twenty-Second {AAAI} Conference on
  Artificial Intelligence, July 22-26, 2007, Vancouver, British Columbia,
  Canada}, pages 489--495. {AAAI} Press, 2007.

\bibitem{DBLP:conf/lpar/SnellPW12}
William Snell, Dirk Pattinson, and Florian Widmann.
\newblock Solving graded/probabilistic modal logic via linear inequalities
  (system description).
\newblock In Nikolaj~S. Bj{\o}rner and Andrei Voronkov, editors, {\em Logic for
  Programming, Artificial Intelligence, and Reasoning - 18th International
  Conference, LPAR-18, M{\'{e}}rida, Venezuela, March 11-15, 2012.
  Proceedings}, volume 7180 of {\em Lecture Notes in Computer Science}, pages
  383--390. Springer, 2012.

\bibitem{DBLP:journals/symmetry/WangWL19}
Dong Wang, Xiaodong Wang, and Shaohe Lv.
\newblock End-to-end mandarin speech recognition combining {CNN} and {BLSTM}.
\newblock {\em Symmetry}, 11(5):644, 2019.

\bibitem{DBLP:series/lncs/Williams19}
R.~Ryan Williams.
\newblock Some estimated likelihoods for computational complexity.
\newblock In Bernhard Steffen and Gerhard~J. Woeginger, editors, {\em Computing
  and Software Science - State of the Art and Perspectives}, volume 10000 of
  {\em Lecture Notes in Computer Science}, pages 9--26. Springer, 2019.

\bibitem{DBLP:journals/pami/YuanYGJ23}
Hao Yuan, Haiyang Yu, Shurui Gui, and Shuiwang Ji.
\newblock Explainability in graph neural networks: {A} taxonomic survey.
\newblock {\em {IEEE} Trans. Pattern Anal. Mach. Intell.}, 45(5):5782--5799,
  2023.

\bibitem{DBLP:journals/aiopen/ZhouCHZYLWLS20}
Jie Zhou, Ganqu Cui, Shengding Hu, Zhengyan Zhang, Cheng Yang, Zhiyuan Liu,
  Lifeng Wang, Changcheng Li, and Maosong Sun.
\newblock Graph neural networks: {A} review of methods and applications.
\newblock {\em {AI} Open}, 1:57--81, 2020.

\end{thebibliography}
\bibliographystyle{plain}

\appendix

\section{Appendix}

\subsection{Proof that $\logicKsharp$ is more expressive than FO (Example 2)}

We show that $\logicKsharp$ is more expressive than FO by proving that the formula $\modalitynumber p \geq \modalitynumber q$ is not expressible by a FO formula $\phi(x)$. We observe that if the property `for all vertices of a graph $\modalitynumber p \geq \modalitynumber q$' is not expressible in FO, then the FO formula $\phi(x)$ doesn't exist, because if it existed the property would be expressible in FO by the formula $\forall x \phi(x)$.

For each integer $n > 0$, we consider the graphs $A_n$ and $B_n$ such that every vertices of $A_n$ verify $\modalitynumber p \geq \modalitynumber q$ while this is not the case for $B_n$ :

\begin{center}
\begin{tikzpicture}
   \node[vertex] (w) at (-0.5, 1) {};
   \node[vertex] (u1) at (-1.5, 0) {$p$};
    \node[vertex] (u2) at (-0.75, 0) {$p$};
    \node at (-0.25, 0) {$...$};
   \node at (-0.9, 1) {$w$};
    \node at (-1.5, -0.4) {$u_1$};
    \node at (-0.75, -0.4) {$u_2$};
    \node at (0.25, -0.4) {$u_n$};
   \node[vertex] (un) at (0.25, 0) {$p$};
    \node[vertex] (v1) at (-1.5, 2) {$q$};
    \node[vertex] (v2) at (-0.75, 2) {$q$};
    \node at (-0.25, 2) {$...$};
    \node at (-1.5, 2.4) {$v_1$};
    \node at (-0.75, 2.4) {$v_2$};
    \node at (0.25, 2.4) {$v_n$};
   \node[vertex] (vn) at (0.25, 2) {$q$};
   \draw[->] (w) edge (v1);
   \draw[->] (w) edge (v2);
    \draw[->] (w) edge (vn);
    \draw[->] (w) edge (u1);
    \draw[->] (w) edge (u2);
   \draw[->] (w) edge (un);

      \node[vertex] (bw) at (5, 1) {};
   \node[vertex] (bu1) at (4, 0) {$p$};
    \node[vertex] (bu2) at (4.75, 0) {$p$};
    \node at (5.25, 0) {$...$};
   \node at (4.6, 1) {$w'$};
    \node at (4, -0.4) {$u'_1$};
    \node at (4.75, -0.4) {$u'_2$};
    \node at (5.75, -0.4) {$u'_n$};
   \node[vertex] (bun) at (5.75, 0) {$p$};
    \node[vertex] (bv1) at (3.75, 2) {$q$};
    \node[vertex] (bv2) at (4.5, 2) {$q$};
    \node at (5, 2) {$...$};
    \node at (3.75, 2.4) {$v'_1$};
    \node at (4.5, 2.4) {$v'_2$};
    \node at (5.5, 2.4) {$v'_n$};
    \node at (6.25, 2.4) {$v'_{n+1}$};
   \node[vertex] (bvn) at (5.5, 2) {$q$};
   \node[vertex] (bvn1) at (6.25, 2) {$q$};
   \draw[->] (bw) edge (bv1);
   \draw[->] (bw) edge (bv2);
    \draw[->] (bw) edge (bvn);
    \draw[->] (bw) edge (bvn1);
    \draw[->] (bw) edge (bu1);
    \draw[->] (bw) edge (bu2);
   \draw[->] (bw) edge (bun);

  \node at (-0.5, -1) {$A_n$};
   \node at (5, -1) {$B_n$};
\end{tikzpicture}
\end{center}

\todo{citer un textbook+expliquer pour le pointed graph}

An $n$-round Ehrenfeucht-Fraïssé game is a game between two players, the spoiler and the duplicator played on two graphs $A = (V_A,E_A,\ell_A)$ and $B = (V_B,E_B,\ell_B)$. We suppose that  On each round the spoiler picks one graph and a vertex in this graph. The duplicator then chooses a vertex on the other graph. We have $n$ vertices ($a_1$,$a_2$,...,$a_n$) chosen in $A$ and $n$ vertices ($b_1$,$b_2$,...,$b_n$) chosen in $B$. The duplicator wins if and only if for $1 \leq i,j \leq n$ :

$a_i=a_j \Leftrightarrow b_i = b_j$

$(a_i,a_j) \in E_A \Leftrightarrow (b_i,b_j) \in E_B$

$\ell(a_i)= \ell(b_i)$

On the graphs $A_n$ and $B_n$, the duplicator wins the Ehrenfeucht-Fraïssé game with $n$ rounds: if the spoiler chooses $w$ (resp. $w'$) the duplicator chooses $w'$ (resp. $w$), if the spoiler chooses some $u_i$ or $v_i$ (resp. $u'_i$ or $v'_i$) the duplicator chooses $u'_j$ or $v'_j$ (resp. $u'_j$ or $v'_j$) (If the world chosen by spoiler has not be chosen in the previous round the duplicator pick a fresh index $j$. Else the duplicator pick the $j$ corresponding to the world chosen in the previous rounds). Since there are only $n$ distinct values that $i$ can take \todo{choose? t'as dit choose avant!}, the duplicator will win the game in $n$ rounds with her strategy. Thus the property "for every vertices of a graph, $\modalitynumber p \geq \modalitynumber q$ is not expressible in FO. Therefore $\logicKsharp$ is more expressive than FO.

%
%
%
%
%
\end{document}